\newcommand {\ent} {\mathrel{{\scriptstyle\mid\!\sim}}}
\newcommand {\sx} {\langle}
\newcommand {\dx} {\rangle}
\newcommand {\emme} {\mathcal{M}}
\newcommand {\enne} {\mathcal{N}}
\newcommand {\vuoto} {\emptyset}
\newcommand{\tip}{{\bf T}}
\newcommand{\lc}{\mathcal{LC}}
\newcommand{\alc}{\mathcal{ALC}}
\newcommand{\alcFt}{\mathcal{ALC}^{\Fe}\tip}
\newcommand{\lcFt}{\mathcal{LC}^{\Fe}\tip}
\newcommand{\be}{\begin{enumerate}}
\newcommand{\ee}{\end{enumerate}}
\newcommand{\hide}[1]{}
\def \cases{\left \{\begin{array}{l}}
\def \endcases{\end{array}\right .}
\newcommand {\Fe} {{\bf F}}
\newcommand {\ri} {\rightarrow}
\newcommand {\Ri} {\Rightarrow}
\newcommand {\bes} {\begin{description}}
\newcommand{\ens} {\end{description}}
\newcommand {\la} {\langle}
\newcommand {\ra} {\rangle}
\newcommand {\beq} {\begin{quote}}
\newcommand {\enq} {\end{quote}}
\newcommand {\bit} {\begin{itemize}}
\newcommand {\enit} {\end{itemize}}
\newenvironment{pozz}{\color{black}}{\color{black}}
\def \ri{\rightarrow}
\def \Ri{\Rightarrow}
\begin{document}
\bibliographystyle{splncs04}

\title{From Weighted Conditionals of Multilayer Perceptrons \\
to Gradual Argumentation and back}



\author{Laura Giordano}

\institute{DISIT - Universit\`a del Piemonte Orientale, 
 Alessandria, Italy  \\
 \email{laura.giordano@uniupo.it} 
}

%

\authorrunning{ }
\titlerunning{ }

 \maketitle
 

\begin{abstract}
A fuzzy multipreference semantics has been recently proposed for weighted conditional knowledge bases, and used to develop a 
logical semantics for Multilayer Perceptrons, 
by regarding a deep neural network (after training) as a weighted conditional knowledge base.
This semantics,
in its different variants, suggests some gradual argumentation semantics, which are related to the family of the gradual semantics studied by Amgoud and Doder. 
The relationships between weighted conditional knowledge bases and MLPs extend to the proposed gradual semantics to capture the  stationary states of MPs,  in agreement with previous results on the relationship between argumentation frameworks and neural networks.
The paper also suggests 
a simple way to extend the proposed semantics to deal attacks/supports by a boolean combination of arguments,
based on the fuzzy semantics of weighted conditionals, as well as 
an approach for defeasible reasoning over a weighted argumentation graph,
 building on the proposed gradual semantics.

\end{abstract}

\section{Introduction}

Argumentation is a reasoning approach which, in its different formulations and semantics, has been used in different contexts in the multi-agent setting, from social networks \cite{Leite2011} to classification 
\cite{Amgoud2008},
and it is very relevant for decision making and for explanation \cite{Toni2019}.
The argumentation semantics are strongly related to other non-monotonic reasoning formalisms and semantics \cite{Dung95,Greco2020}. 

Our starting point in this paper is a preferential semantics for common-sense reasoning which has been proposed for a description logic with typicality.
Preferential description logics have been studied in the last fifteen years to deal with inheritance with exceptions in ontologies, based on the idea of extending the language of Description Logics (DLs),
by allowing for non-strict forms of inclusions,
called {\em typicality or defeasible inclusions}, 
of the form $\tip(C) \sqsubseteq D$ (meaning ``the typical $C$-elements are $D$-elements" or ``normally $C$'s are $D$'s"), 
with different preferential semantics \cite{lpar2007,sudafricaniKR} 
and closure constructions \cite{casinistraccia2010,CasiniDL2013,AIJ15,BonattiSauro17,CasiniStracciaM19}. 
Such defeasible inclusions correspond to KLM conditionals $C \ent D$ \cite{KrausLehmannMagidor:90,whatdoes},
and defeasible DLs inherit and extend some of the preferential semantics and the closure constructions developed within
preferential and conditional approaches 
to common sense reasoning 
\cite{KrausLehmannMagidor:90,Pearl90,whatdoes,GeffnerAIJ1992,BenferhatIJCAI93}. 


In previous work \cite{JELIA2021}, a concept-wise multipreference semantics 
for weighted conditional knowledge bases (KBs) has been proposed to account for preferences with respect to different concepts, 
by allowing a set of typicality inclusions of the form $\tip(C) \sqsubseteq D$ 
with positive or negative weights, for distinguished concepts $C$.
The concept-wise multipreference semantics has been  first introduced  as a semantics for ranked DL knowledge bases  \cite{iclp2020} (where conditionals are associated a positive integer rank), and later extended to  weighted conditional KBs (in the two-valued and in the fuzzy case), based on a different semantic closure construction,
still in the spirit of Lehmann's lexicographic closure \cite{whatdoes} and Kern-Isberner's c-representations  \cite{Kern-Isberner01,Kern-Isberner2014}, but exploiting multiple preferences with respect to concepts. 

The concept-wise multipreference semantics 
has been proven to have some desired properties from the knowledge representation point of view in the two-valued case \cite{iclp2020,NMR2020}: 
it satisfies the KLM properties of a preferential consequence relation \cite{KrausLehmannMagidor:90,whatdoes}, it allows to deal with specificity and irrelevance and avoids inheritance blocking or the ``drowning problem"   \cite{Pearl90,BenferhatIJCAI93},
and deals with ``ambiguity preservation" \cite{GeffnerAIJ1992}.
The plausibility of the concept-wise multipreference semantics has also been supported \cite{CILC2020,arXivSI_JLC2021} by showing that it is able to provide a logical interpretation to  Kohonen' Self-Organising Maps  \cite{kohonen2001}, which are psychologically and biologically plausible neural network models.
In the fuzzy case, the KLM properties  of non-monotomic entailment have been studied in \cite{ECSQARU2021}, showing that
most KLM postulates are satisfied, depending on their reformulation and on the choice of fuzzy combination functions. 


It has been shown \cite{JELIA2021} that both in the two-valued and in the fuzzy case, the multi-preferential semantics allows to describe the input-output behavior of Multilayer Perceptrons (MLPs), after training, 
in terms of preferential interpretations. A deep network can  be seen as a weighted conditional KB, 
based on the idea that synaptic connections in the network can be regarded as weighted conditionals. In the fuzzy case, the interpretation built to capture the input-output behavior of the network can than be proven to be a model of the weighted KB associated to the neural network in a logical sense.

The relationships between preferential and conditional approaches to non-monotonic reasoning and argumentation semantics are strong. Let us also mention, 
the work by Geffner and Pearl on Conditional Entailment, whose proof theory is defined in terms of ``arguments'' \cite{GeffnerAIJ1992}.
In this paper we aim at investigating the relationships between the fuzzy multipreference semantics for weighted conditionals and  gradual argumentation semantics 
\cite{CayrolJAIR2005,Janssen2008,Dunne2011,Leite2013,Amgoud2017,BaroniRagoToni2018,Amgoud2019,VesicECSQARU21}. 
To this purpose, in addition to the notions of coherent and faithful fuzzy multipreference semantics \cite{JELIA2021,ECSQARU2021}, in Section \ref{sec:varphi_coherent_models}, we introduce a notion of $\varphi$-coherent (fuzzy) multipreference semantics.
Than, in Section  \ref{sec:varphi_labellings}, we propose three new gradual semantics for a weighted argumentation graph (namely, a coherent, a faithful and a $\varphi$-coherent semantics) inspired by the fuzzy preferential semantics of weighted conditionals and, in Section   \ref{sec:gradual_sem}, we investigate the relationship of $\varphi$-coherent semantics with the family of gradual semantics studied by Amgoud and Doder. 
The relationships between weighted conditional knowledge bases and MLPs easily extend to the proposed gradual semantics, which captures the  stationary states of MLPs. This is  in agreement with the previous results 
on the relationships between argumentation frameworks and neural networks by Garces, Gabbay and Lamb \cite{GarcezArgumentation2005} and by Potyca  \cite{PotykaAAAI21}. 

Finally, in Section  \label{sec:Back}, the paper discusses an approach for defeasible reasoning over a weighted argumentation graph, building on the $\varphi$-coherent semantics, which allows (fuzzy) conditional formulas over arguments to be verified over preferential models of the argumentation graph.
The paper also suggests 
a simple way to extend the proposed semantics to deal with attacks/supports by a boolean combination of arguments, based on the fuzzy semantics of weighted conditionals.

\section{The description logic $\lc$ and fuzzy $\lc$}  \label{sec:ALC}

In this section we recall the syntax and semantics of the description logic $\alc$ \cite{handbook} and of its fuzzy extension \cite{LukasiewiczStraccia09}. 
For sake of simplicity, we only focus on $\lc$, the boolean fragment of $\alc$, which does not allow for roles. 
Let ${N_C}$ be a set of concept names, 
  and ${N_I}$ a set of individual names.  
The set  of $\lc$ \emph{concepts} (or, simply, concepts) can be
defined inductively: \\ 
- $A \in N_C$, $\top$ and $\bot$ are {concepts};\\
- if $C$ and $ D$ are concepts, and $r \in N_R$, then $C \sqcap D,\; C \sqcup D,\; \neg C $ 
are {concepts}.

\noindent
An $\lc$ knowledge base (KB) $K$ is a pair $({\cal T}_K, {\cal A}_K)$, where ${\cal T}_K$ is a TBox and
${\cal A}_K$ is an ABox.
The TBox ${\cal T}_K$ is  a set of concept inclusions (or subsumptions) $C \sqsubseteq D$, where $C,D$ are concepts.
The  ABox ${\cal A}_K$ is  a set of assertions of the form $C(a)$, where $C$ is a  concept and $a$ an individual name in $N_I$. 

An  $\lc$ {\em interpretation}  is defined as a pair $I=\langle \Delta, \cdot^I \rangle$ where:
$\Delta$ is a domain---a set whose elements are denoted by $x, y, z, \dots$---and 
$\cdot^I$ is an extension function that maps each
concept name $C\in N_C$ to a set $C^I \subseteq  \Delta$, 
and each individual name $a\in N_I$ to an element $a^I \in  \Delta$.
It is extended to complex concepts  as follows:
\begin{center}
$\top^I=\Delta$  \ \ \ \ \ \ \ \ \ \ \    \ \ \ \ \ \ \ \ \ \ \ \ \ \
$\bot^I=\vuoto$  \ \ \ \ \ \ \ \ \ \ \   \ \ \ \ \ \ \ \ \ \ \ \ \ \ 
$(\neg C)^I=\Delta \backslash C^I$

$(C \sqcap D)^I=C^I \cap D^I$  \ \ \ \ \ \ \  \ \ \ \ \ \ \  \ \ \ \ \ \ \  \ \ \ \ \ \ \ \ \ \ \
$(C \sqcup D)^I=C^I \cup D^I$
\end{center}
%

\noindent
The notion of satisfiability of a KB  in an interpretation and the notion of entailment are defined as follows:

\begin{definition}[Satisfiability and entailment] \label{satisfiability}
Given an $\lc$ interpretation $I=\langle \Delta, \cdot^I \rangle$: 

	- $I$  satisfies an inclusion $C \sqsubseteq D$ if   $C^I \subseteq D^I$;
	
	-   $I$ satisfies an assertion $C(a)$ 
	if $a^I \in C^I$ 

\noindent
 Given  a KB $K=({\cal T}_K, {\cal A}_K)$, 
 an interpretation $I$  satisfies ${\cal T}_K$ (resp. ${\cal A}_K$) if $I$ satisfies all  inclusions in ${\cal T}_K$ (resp. all assertions in ${\cal A}_K$);
 $I$ is a \emph{model} of $K$ if $I$ satisfies ${\cal T}_K$ and ${\cal A}_K$.

 A subsumption $F= C \sqsubseteq D$ (resp., an assertion $C(a)$),   {is entailed by $K$}, written $K \models F$, if for all models $I=$$\sx \Delta,  \cdot^I\dx$ of $K$, $I$ satisfies $F$.

\end{definition}
Given a knowledge base $K$,
the {\em subsumption} problem is the problem of deciding whether an inclusion $C \sqsubseteq D$ is entailed by  $K$.

Fuzzy description logics have been widely studied in the literature for representing vagueness in DLs \cite{Straccia05,Stoilos05,LukasiewiczStraccia09,PenalosaARTINT15,BobilloOWL2EL2018}, 
based on the idea that concepts and roles can be interpreted 
as fuzzy sets. 
Formulas in Mathematical Fuzzy Logic \cite{Cintula2011} have a degree of truth in an interpretation rather than being true or false; similarly,
axioms in a fuzzy DL have a degree of truth, usually in the interval  $[0, 1]$. 
In the following we shortly recall the semantics of a fuzzy extension of $\alc$ for the fragment $\lc$, referring to the survey by Lukasiewicz and Straccia \cite{LukasiewiczStraccia09}.
We limit our consideration  to a few features of a fuzzy DL, without considering roles, datatypes, and restricting to the language of $\lc$.

A {\em fuzzy interpretation} for $\lc$ is a pair $I=\langle \Delta, \cdot^I \rangle$ where:
$\Delta$ is a non-empty domain and 
$\cdot^I$ is {\em fuzzy interpretation function} that assigns to each
concept name $A\in N_C$ a function  $A^I :  \Delta \ri [0,1]$,
and to each individual name $a\in N_I$ an element $a^I \in  \Delta$.
A domain element $x \in \Delta$ 
belongs to the extension of $A$ to some degree in $[0, 1]$, i.e., $A^I$ is a fuzzy set.

The  interpretation function $\cdot^I$ is extended to complex concepts as follows: 

$\mbox{\ \ \ }$ $\top^I(x)=1$, $\mbox{\ \ \ \ \ \ \ \ \ \  \ \ \ \ \ \ \ \ \ \ }$ $\bot^I(x)=0$,  $\mbox{\ \ \ \ \ \ \ \ \ \ \ \ \ \ \  \ \ \ \ \ }$  $(\neg C)^I(x)= \ominus C^I(x)$, 

 $\mbox{\ \ \  }$ $(C \sqcap D)^I(x) =C^I(x) \otimes D^I(x)$,    $\mbox{\ \ \ \ \ \ \ \ \ \ \ \ \ \ \ \ \ }$   $(C \sqcup D)^I(x) =C^I(x) \oplus D^I(x)$.



\noindent
where  $x \in \Delta$ and $\otimes$, $\oplus$, $\rhd$ and $\ominus$ are arbitrary but fixed t-norm, s-norm, implication function, and negation function, chosen among the combination functions of various fuzzy logics 
(we refer to \cite{LukasiewiczStraccia09} for details). For instance, in Zadeh logic $a \otimes b= min\{a,b\}$,  $a \oplus b= max\{a,b\}$, $a \rhd b= max\{1-a,b\}$ and $ \ominus a = 1-a$.

The  interpretation function $\cdot^I$ is also extended  to non-fuzzy axioms (i.e., to strict inclusions and assertions of an $\lc$ knowledge base) as follows:\\
 $(C \sqsubseteq D)^I= inf_{x \in \Delta}  C^I(x) \rhd D^I(x)$,
$\mbox{\ \ \ \ \ \ \ \ \ \ \ \ \ }$  $(C(a))^I=C^I(a^I)$.  

A {\em fuzzy $\lc$ knowledge base} $K$ is a pair $({\cal T}_f, {\cal A}_f)$ where ${\cal T}_f$ is a fuzzy TBox  and ${\cal A}_f$ a fuzzy ABox. A fuzzy TBox is a set of {\em fuzzy concept inclusions} of the form $C \sqsubseteq D \;\theta\; n$, where $C \sqsubseteq D$ is an $\lc$ concept inclusion axiom, $\theta \in \{\geq,\leq,>,<\}$ and $n \in [0,1]$. A fuzzy ABox ${\cal A}_f$ is a set of {\em fuzzy assertions} of the form $C(a) \theta n$, where $C$ is an $\lc$ concept, $a \in N_I$,  $\theta \in \{{\geq,}\leq,>,<\}$ and $n \in [0,1]$.
Following Bobillo and Straccia  \cite{BobilloOWL2EL2018}, we assume that fuzzy interpretations are {\em witnessed}, i.e., the {\em sup} and {\em inf} are attained at some point of the involved domain.
The notions of satisfiability of a KB  in a fuzzy interpretation and of entailment are defined in the natural way.
\begin{definition}[Satisfiability and entailment for fuzzy KBs] \label{satisfiability}
A  fuzzy interpretation $I$ satisfies a fuzzy $\lc$ axiom $E$ (denoted $I \models E$), as follows, 
 for $\theta \in \{\geq,\leq,>,<\}$:

- $I$ satisfies a fuzzy $\lc$ inclusion axiom $C \sqsubseteq D \;\theta\; n$ if $(C \sqsubseteq D)^I \theta\; n$;

- $I$ satisfies a fuzzy $\lc$ assertion $C(a) \; \theta \; n$ if $C^I(a^I) \theta\; n$;
 

\noindent
Given  a fuzzy $\lc$ KB $K=({\cal T}_f, {\cal A}_f)$,
 a fuzzy interpretation $I$  satisfies ${\cal T}_f$ (resp. ${\cal A}_f$) if $I$ satisfies all fuzzy  inclusions in ${\cal T}_f$ (resp. all fuzzy assertions in ${\cal A}_f$).
A fuzzy interpretation $I$ is a \emph{model} of $K$ if $I$ satisfies ${\cal T}_f$ and ${\cal A}_f$.
A fuzzy axiom $E$   {is entailed by a fuzzy knowledge base $K$} (i.e., $K \models E$) if for all models $I=$$\sx \Delta,  \cdot^I\dx$ of $K$,
$I$ satisfies $E$.
\end{definition}

\section{Fuzzy $\lc$ with typicality: $\lcFt$} \label{sec:fuzzyalc+T}

In this section, we describe an extension of fuzzy $\lc$ with typicality following \cite{JELIA2021,ECSQARU2021}.
Typicality concepts of the form $\tip(C)$ are added, where $C$ is a concept in fuzzy $\lc$.
The idea is similar to the extension of $\alc$ with typicality under the two-valued semantics \cite{lpar2007} 
but transposed to the fuzzy case. The extension allows for the definition of {\em fuzzy typicality inclusions} of the form
$\tip(C) \sqsubseteq D \;\theta \;n$,  
meaning that typical $C$-elements are $D$-elements with a degree greater than $n$. A typicality  inclusion $\tip(C) \sqsubseteq D$, as in the two-valued case, stands for a KLM conditional implication $C \ent D$ \cite{KrausLehmannMagidor:90,whatdoes}, but now it has an associated degree.
We call $\lcFt$ the extension of fuzzy $\lc$ with typicality.
As in the two-valued case,  
and in the propositional typicality logic, PTL, \cite{BoothCasiniAIJ19} 
the typicality concept may be allowed to freely occur within inclusions and assertions, while the nesting of the typicality operator is not allowed.

Observe that, in a fuzzy $\lc$ interpretation $I= \langle \Delta, \cdot^I \rangle$, the degree of membership $C^I(x)$ of the domain elements $x$ in a concept $C$, induces a preference relation $<_C$ on $\Delta$, as follows:
\begin{equation}\label{def:induced_order}
x <_C y \mbox{ iff } C^I(x) > C^I(y)
\end{equation}
Each $<_{C}$ has the properties of preference relations in KLM-style ranked interpretations \cite{whatdoes}, that is,  $<_{C}$ is a modular and well-founded strict partial order. 
Let us recall that, $<_{C}$ is {\em well-founded} 
if there is no infinite descending chain $x_1 <_C x_0$, $x_2 <_C x_1$, $x_3 <_C x_2, \ldots $ of domain elements;
    $<_{C}$ is {\em modular} if,
for all $x,y,z \in \Delta$, $x <_{C} y$ implies ($x <_{C} z$ or $z <_{C} y$).
Well-foundedness holds for the induced preference $<_C$ defined by condition (\ref{def:induced_order}) under the assumption that  fuzzy interpretations are witnessed \cite{BobilloOWL2EL2018} (see Section \ref{sec:ALC}) or that $\Delta$ is finite. 
For simplicity,  we will assume $\Delta$ to be finite.

Each preference relation $<_C$ has  the properties of a preference relation in KLM  rational interpretations \cite{whatdoes} (also called ranked interpretations), but here there are
multiple preferences and, therefore, fuzzy interpretations can be regarded as {\em multipreferential} interpretations, which have been also studied in the two-valued case \cite{iclp2020,Delgrande2020,AIJ21}. 
%
Preference relation $<_C$ captures the relative typicality of domain elements wrt concept $C$ and may then be used to identify the {\em typical  $C$-elements}. We will regard typical $C$-elements as the domain elements $x$ that  are preferred with respect to relation $<_C$
among those such that $C^I(x) \neq 0$.

Let $C^I_{>0}$ be the crisp set containing all domain elements $x$ such that $C^I(x)>0$, that is, $C^I_{>0}= \{x \in \Delta \mid C^I(x)>0 \}$.
One can provide a (two-valued) interpretation of typicality concepts $\tip(C)$ in a fuzzy interpretation $I$, by letting:
\begin{align}\label{eq:interpr_typicality}
	(\tip(C))^I(x)  & = \left\{\begin{array}{ll}
						 1 & \mbox{ \ \ \ \  if } x \in min_{<_C} (C^I_{>0}) \\
						0 &  \mbox{ \ \ \ \  otherwise } 
					\end{array}\right.
\end{align} 
where $min_<(S)= \{u: u \in S$ and $\nexists z \in S$ s.t. $z < u \}$.  When $(\tip(C))^I(x)=1$, we say that $x$ is a typical $C$-element in $I$.

Note that, if $C^I(x)>0$ for some $x \in \Delta$,  
$min_{<_C} (C^I_{>0})$ is non-empty.
This generalizes the property that, in the crisp case, $C^I\neq \emptyset$ implies  $(\tip(C))^I\neq \emptyset$.

\begin{definition}[$\lcFt$ interpretation]
An $\lcFt$ interpretation $I= \langle \Delta, \cdot^I \rangle$ is a fuzzy $\lc$ interpretation, extended by interpreting typicality concepts as in (\ref{eq:interpr_typicality}).
\end{definition}

The fuzzy interpretation  $I= \langle \Delta, \cdot^I \rangle$ implicitly defines a multipreference interpretation, where any concept $C$ is associated to a preference  relation $<_C$.  This is different from 
the two-valued multipreference semantics in \cite{iclp2020}, where only the set of distinguished concepts have an associated preference, 
and a notion of global preference $<$ is introduced to define the interpretation of the typicality concept $\tip(C)$, for an arbitrary concept $C$. Here, we do not need to introduce a notion of global preference. The interpretation of any $\lc$ concept $C$ is defined compositionally from the interpretation of atomic concepts, and the preference relation $<_C$ associated to $C$ is defined from $C^I$.

The notions of {\em satisfiability} in $\lcFt$,   {\em model} of an $\lcFt$ knowledge base, and   $\lcFt$ {\em entailment} can be defined in a similar way as in fuzzy $\lc$ (see Section  \ref{sec:ALC}). 

\subsection{Strengthening $\lcFt$: a closure construction} \label{sec:closure}

To overcome the weakness of preferential entailment, the rational closure \cite{whatdoes} and the lexicographic closure of a conditional knowledge base \cite{Lehmann95} have been introduced, to allow for further inferences.
%
In this section, we recall  a closure construction introduced to strengthen $\alcFt$ entailment for weighted conditional knowledge bases, where typicality inclusions are associated real-valued weights.
In the two-valued case, the construction is related to the definition of Kern-Isberner's c-representations  \cite{Kern-Isberner01,Kern-Isberner2014}, which include penalty points for falsified conditionals.
In the fuzzy case, the construction also relates to the fuzzy extension of rational closure by Casini and Straccia \cite{CasiniStraccia13_fuzzyRC}.

A  {\em weighted $\lcFt$ knowledge base} $K$, over a set ${\cal C}= \{C_1, \ldots, C_k\}$ of distinguished $\lc$ concepts,
is a tuple $\langle  {\cal T}_{f}, {\cal T}_{C_1}, \ldots, {\cal T}_{C_k}, {\cal A}_f  \rangle$, where  ${\cal T}_{f}$  is a set of fuzzy $\lcFt$ inclusion axiom, 
${\cal A}_f$ is a set of fuzzy $\lcFt$ assertions  
and
${\cal T}_{C_i}=\{(d^i_h,w^i_h)\}$ is a set of all weighted typicality inclusions $d^i_h= \tip(C_i) \sqsubseteq D_{i,h}$ for $C_i$, indexed by $h$, where each inclusion $d^i_h$ has weight $w^i_h$, a real number.
As in \cite{JELIA2021}, the typicality operator is assumed to occur only on the left hand side of a weighted typicality inclusion, and we call {\em distinguished concepts}  those concepts $C_i$ occurring on the l.h.s. of some typicality inclusion $\tip(C_i) \sqsubseteq D$.
Arbitrary $\lcFt$ inclusions and assertions may belong to ${\cal T}_{f}$ and ${\cal A}_{f}$. Let us consider the following example from \cite{ECSQARU2021}.

\begin{example} \label{exa:Penguin} 
Consider the weighted knowledge base $K =\langle {\cal T}_{f},  {\cal T}_{Bird}, {\cal T}_{Penguin},$ $ {\cal T}_{Canary},$ $ {\cal A}_f \rangle$, over the set of distinguished concepts ${\cal C}=\{\mathit{Bird, Penguin, Canary}\}$, 
with $ {\cal T}_{f}$ containing, for instance, the inclusions:

\noindent
\ \ $\mathit{Yellow \sqcap Black  \sqsubseteq  \bot} \geq 1$
 \ \ \ \ \ \ \ \ \ \ \   $\mathit{Yellow \sqcap Red  \sqsubseteq  \bot \geq 1}$   \ \ \ \ \ \ \ \ \ \  $\mathit{Black \sqcap Red  \sqsubseteq  \bot \geq 1}$

\noindent
The weighted TBox ${\cal T}_{Bird} $ 
contains the following weighted defeasible inclusions: 

$(d_1)$ $\mathit{\tip(Bird) \sqsubseteq Fly}$, \ \  +20  \ \ \ \ \ \ \ \ \  \ \ \ \ \   $(d_2)$ $\mathit{\tip(Bird) \sqsubseteq  Has\_Wings}$, \ \ +50

$(d_3)$ $\mathit{\tip(Bird) \sqsubseteq   Has\_Feather}$, \ \ +50;

\noindent
${\cal T}_{Penguin}$ and  ${\cal T}_{Canary}$ contain, respectively, the following defeasible inclusions:

$(d_4)$ $\mathit{\tip(Penguin) \sqsubseteq Bird}$, \ \ +100 \ \ \ \ \ \ \ \ \ \ \ $(d_7)$ $\mathit{\tip(Canary) \sqsubseteq Bird}$, \ \ +100

$(d_5)$ $\mathit{\tip(Penguin) \sqsubseteq  Fly}$, \ \ - 70   \ \ \ \ \ \ \ \ \ \ \ \ \ \ \  $(d_8)$ $\mathit{\tip(Canary) \sqsubseteq Yellow}$, \ \  +30

$(d_6)$ $\mathit{\tip(Penguin) \sqsubseteq Black}$, \ \  +50; \ \ \ \ \ \ \ \ \ \ $(d_9)$ $\mathit{\tip(Canary) \sqsubseteq Red}$, \ \  +20

\noindent
 The meaning is that a bird normally has wings, has feathers and flies, but having wings and feather (both with weight 50)  for a bird is more plausible than flying (weight 20), although flying is regarded as being plausible. For a penguin, flying is not plausible (inclusion $(d_5)$ has negative weight -70), while being a bird and being black are plausible properties of prototypical penguins, and $(d_4)$ and $(d_6)$ have positive weights (100 and 50, respectively). Similar considerations can be done for concept $\mathit{Canary}$. Given an Abox in which Reddy is red, has wings, has feather and flies (all with degree 1) and Opus has wings and feather (with degree 1), is black with degree 0.8 and does not fly ($\mathit{Fly^I(opus) = 0}$), considering the weights of defeasible inclusions, we may expect Reddy to be more typical than Opus as a bird, but less typical than Opus as a penguin. 
 
 \end{example}
 \normalcolor

The semantics of a weighted knowledge base is defined in \cite{JELIA2021} trough a {\em semantic closure construction}, similar in spirit to Lehmann's lexicographic closure \cite{Lehmann95}, but strictly related to c-representations and, additionally, based on multiple preferences.
The construction allows a subset of the $\alcFt$ interpretations to be selected, 
the interpretations whose induced preference relations $<_{C_i}$, for the distinguished concepts $C_i$,  {\em faithfully} represent the defeasible part of the knowledge base $K$.

Let ${\cal T}_{C_i}=\{(d^i_h,w^i_h)\}$ be the set of weighted typicality inclusions $d^i_h= \tip(C_i) \sqsubseteq D_{i,h}$ associated to the distinguished concept $C_i$, and let $I=\langle \Delta, \cdot^I \rangle$ be a fuzzy $\lcFt$ interpretation.
In the two-valued case, we would associate to each domain element $x \in \Delta$ and each distinguished concept $C_i$, a weight $W_i(x)$ of $x$ wrt $C_i$ in $I$, by {\em summing the weights} of the defeasible inclusions satisfied by $x$.
However, as $I$ is a fuzzy interpretation, we do not only  distinguish between the typicality inclusions satisfied or  
falsified  by $x$;
 we also need to consider, for all inclusions $\tip(C_i) \sqsubseteq D_{i,h} \in {\cal T}_{C_i}$,  
the degree of membership of $x$ in $D_{i,h}$. 
Furthermore, in comparing the weight of domain elements with respect to $<_{C_i}$, we give higher preference to the domain elements belonging to $C_i$ (with a degree
greater than $0$), with respect to those 
not belonging to $C_i$ (having membership degree $0$). 

For each domain element $x \in \Delta$ and distinguished concept $C_i$, {\em the weight $W_i(x)$ of $x$ wrt $C_i$} in the $\lcFt$ interpretation $I=\langle \Delta, \cdot^I \rangle$ is defined as follows:
 \begin{align}\label{weight_fuzzy}
	W_i(x)  & = \left\{\begin{array}{ll}
						 \sum_{h} w_h^i  \; D_{i,h}^I(x) & \mbox{ \ \ \ \  if } C_i^I(x)>0 \\
						- \infty &  \mbox{ \ \ \ \  otherwise }  
					\end{array}\right.
\end{align} 
where $-\infty$ is added at the bottom of all real values.

The value of $W_i(x) $ is $- \infty $ when $x$ is not a $C$-element (i.e., $C_i^I(x)=0$). 
Otherwise, $C_i^I(x) >0$ and the higher is the sum $W_i(x) $, the more typical is the element $x$ relative to the defeasible properties of $C_i$.
How much $x$ satisfies a typicality property  $\tip(C_i) \sqsubseteq D_{i,h}$ depends on the value of $D_{i,h}^I(x) \in [0,1]$, which is weighted by $ w_h^i $ in the sum. 
In the {\em two-valued case}, $D_{i,h}^I(x) \in \{0,1\}$, and 
$W_i(x)$ is the sum of the weights of the typicality inclusions for $C$ satisfied by $x$, if $x$ is a $C$-element,  and is $-\infty $, otherwise.

\begin{example} \label{exa:penguin2}
Let us consider again Example \ref{exa:Penguin}.
Let $I$ be an $\lcFt$ interpretation such that $\mathit{Fly^I(reddy)  = ( Has\_Wings)^I (reddy)= (Has\_Feather)^I (reddy)=1}$ and  
 $\mathit{Red^I(red}$- $\mathit{dy) =1 }$,
i.e., Reddy   flies, has wings and feather and is red (and $\mathit{Black^I(reddy)}$ $=0$). Suppose further that $\mathit{Fly^I(opus) = 0}$ and $\mathit{ (Has\_Wings)^I (opus)= (Has\_}$ $ \mathit{Feather)^I }$ $\mathit{(opus)=1 }$ and $\mathit{ Black^I(opus) =0.8}$, i.e., Opus does not fly, has wings and feather, and is black with degree 0.8. Considering the weights of typicality inclusions for $\mathit{Bird}$,  $\mathit{W_{Bird}(reddy)= 20+}$ $\mathit{50+50=120}$ and $\mathit{W_{Bird}(opus)= 0+50+}$ $50=100$.
This suggests that reddy should be more typical as a bird than opus.
On the other hand, if we suppose further that $\mathit{Bird^I(reddy)=1}$ and $\mathit{Bird^I(opus)}$ $=0.8$, then $\mathit{W_{Penguin}}$ $\mathit{(reddy)}$ $ \mathit{= 100-70+0=30}$ and $\mathit{W_{Penguin}(opus)= 0.8 \times 100-} $ $\mathit{ 0+0.8 \times 50}$ $\mathit{=120}$, 
and Reddy should be less typical as a penguin than Opus.
 \end{example}
%

In \cite{JELIA2021} a notion of {\em coherence} is introduced, to force an agreement between the preference relations  $<_{C_i}$ induced by a fuzzy interpretation $I$, for each distinguished concept $C_i$, and the weights $W_i(x)$ computed, for each $x \in \Delta$, from the conditional knowledge base $K$, given the interpretation $I$.
This leads to the following definition of a coherent fuzzy multipreference model of a weighted a $\lcFt$  knowledge base.

\begin{definition}[Coherent (fuzzy) multipreference model of $K$ \cite{JELIA2021}]\label{fuzzy_cfm-model} 
Let $K=\langle  {\cal T}_{f},$ $ {\cal T}_{C_1}, \ldots,$ $ {\cal T}_{C_k}, {\cal A}_f  \rangle$ be  a weighted $\lcFt$ knowledge base  over  ${\cal C }$. 
A {\em  coherent (fuzzy) multipreference model} (cf$^m$-model)  of $K$ is  a fuzzy $\lcFt$ interpretation $I=\langle \Delta, \cdot^I \rangle$  
s.t.: 
\begin{itemize}
\item
$I$  satisfies  the fuzzy inclusions in $ {\cal T}_{f}$ and the fuzzy assertions in ${\cal A}_f$;
\item 
for all $C_i\in {\cal C}$,  the preference {\em $<_{C_i}$   is coherent  to $ {\cal T}_{C_i}$}, that is, for all $x,y \in \Delta$,
\begin{align}\label{coherence_2}
x  <_{C_i}  y & \iff W_i(x) > W_i(y)  
\end{align}
\end{itemize}
\end{definition}
In a similar way, one can define a {\em faithful (fuzzy) multipreference model (fm-model) of $K$} by replacing   the {\em coherence} condition (\ref{coherence_2}) with the following {\em faithfulness} condition (called weak coherence in \cite{arXiv_JELIA2020}):  for all $x,y \in \Delta$, 
\begin{align} \label{faitfulness}
x  <_{C_i}  y & \Ri W_i(x) > W_i(y) .
\end{align}
The weaker notion of faithfulness allows to define a larger class of fuzzy multipreference models of a weighted knowledge base, compared to the class of coherent models. 
This allows a larger class of monotone non-decreasing activation functions in neural network models to be captured, whose activation function is monotonically non-decreasing (we refer to  \cite{arXiv_JELIA2020}, Sec. 7).
%

\begin{example}
Referring to Example  \ref{exa:penguin2} above, where
 $\mathit{Bird^I(reddy)=1}$, $\mathit{Bird^I(opus)}$ $=0.8$,  let us further assume that $\mathit{Penguin^I(reddy)=0.2}$ and $\mathit{Penguin^I(opus)=0.8}$.
Clearly, $reddy <_{Bird} opus$ and  $opus <_{Penguin} reddy$. For the interpretation $I$ to be faithful, it is necessary that the conditions $\mathit{W_{Bird}(reddy) > W_{Bird}(opus)}$ and $\mathit{W_{Penguin}}$ $\mathit{(opus) >}$ $\mathit{ W_{Penguin}(reddy)}$ hold; which is true. 
On the contrary, if it were $\mathit{Penguin^I}$ $\mathit{(reddy)=0.9}$, the interpretation $I$ would not be faithful. 
For $\mathit{Penguin^I(reddy)=0.8}$, the interpretation $I$ would be faithful, but not coherent, as  $\mathit{W_{Penguin}(opus) >}$ $\mathit{W_{Penguin}}$ $\mathit{ (reddy)}$, but $\mathit{Penguin^I(opus) = Penguin^I(reddy)}$.
\end{example}

It has been shown \cite{JELIA2021,arXiv_JELIA2020} that the proposed semantics allows the input-output behavior of a deep network (considered after training) to be captured by a fuzzy multipreference interpretation built over a set of input stimuli, through a simple construction which exploits the activity level of neurons for the stimuli. 
Each unit $h$ of $\enne$ can be associated to a concept name $C_h$ and,
for a given domain $\Delta$ of input stimuli, the activation value of unit $h$ for a stimulus $x$ is interpreted as the degree of membership of $x$ in concept $C_h$. 
The resulting preferential interpretation can be used for verifying properties of the network by model checking. 

For MLPs, the deep network itself can be regarded as a conditional knowledge base,  by mapping synaptic connections to weighted conditionals, so that the input-output model of the network can be regarded as a coherent-model of the associated conditional knowledge base \cite{JELIA2021}.

\section{$\varphi$-coherent models} \label{sec:varphi_coherent_models}

In this section we consider a new notion of coherence of a fuzzy interpretation $I$ wrt a KB, that we call {\em $\varphi$-coherence}, where $\varphi$ 
is a function  from $\mathbb{R}$ to the interval $[0,1]$, i.e.,
 $\varphi: \mathbb{R} \rightarrow [0,1]$. We also establish relationships with  coherent and faithful models.
 
 \begin{definition}[$\varphi$-coherence]\label{varphi-coherence} 
 Let $K=\langle  {\cal T}_{f},$ $ {\cal T}_{C_1}, \ldots,$ $ {\cal T}_{C_k}, {\cal A}_f  \rangle$ be  a weighted $\lcFt$ knowledge base, and 
 $\varphi: \mathbb{R} \rightarrow [0,1]$.
A fuzzy $\lcFt$ interpretation $I=\langle \Delta, \cdot^I \rangle$  is {\em $\varphi$-coherent} if, 
for all concepts $C_i \in {\cal C}$ and $x\in \Delta$, 
\begin{align}\label{fi_coherence}
C_i^I(x)= \varphi  (\sum_{h} w_h^i  \; D_{i,h}^I(x)) 
\end{align}
where ${\cal T}_{C_i}=\{(\tip(C_i) \sqsubseteq D_{i,h},w^i_h)\}$ is the set of weighted conditionals  for $C_i$.
\end{definition}
To define {\em $\varphi$-coherent multipreference model} of a knowledge base $K$, we can replace the  {\em coherence} condition (\ref{coherence_2}) in Definition \ref{fuzzy_cfm-model}   with  the notion of {\em $\varphi$-coherence} of an interpretation $I$ wrt the knowledge base $K$.

Observe that,  for all $x$ such that $C_i(x)>0$, condition (\ref{fi_coherence}) above corresponds to condition $C_i^I(x)= \varphi (W_i(x))$.
While the notions of {\em coherence} and of weight $W_i(x)$ (of an element $x$ wrt a concept $C_i$) consider, as a special case, the case when $C_i(x)=0$, in condition (\ref{fi_coherence}) we impose the same constraint to all domain elements $x$ (including those with $C_i(x)=0$).

For Multilayer Perceptrons, let us associate a concept name $C_i$ to each unit $i$ in a deep network $\enne$, and let us interpret,  as in \cite{JELIA2021}, a synaptic connection between neuron $h$ and neuron $i$ with weight $w_{ih}$ as the conditional $\tip(C_i) \sqsubseteq C_j$ with weight $w_h^i=w_{ih}$. If we assume that $\varphi$ is the {\em activation function} of {\em all units} in the network $\enne$, then condition (\ref{fi_coherence}) characterizes the stationary states of MLPs, where 
$C_i^I(x)$ corresponds to the activation of neuron $i$ for some input stimulus $x$ and $\sum_{h} w_h^i  \; D_{i,h}^I(x)$ corresponds to the {\em induced local field} $u_i$ of neuron $i$, which is obtained by summing the input signals to the neuron, $x_1, \ldots, x_n$, weighted by the respective synaptic weights: $u_i=\sum^n_{h=1} w_{ih} x_h$ \cite{Haykin99}, where the bias $b_i$ has been included in the sum, by adding a new synapse with input $x_0=+1$ and synaptic weight $w_{i0}=b_i$. Here, each $D_{i,h}^I(x)$ corresponds to the input signal $x_h$, for the input stimulus $x$.
Of course, {\em $\varphi$-coherence} could be easily extended to deal with a different activation functions $\varphi_i$  for each concept $C_i$ (i.e., for each unit $i$).


\begin{proposition} \label{prop:phi_coherent_models}
Let $K$ be a weighted conditional knowledge base and $\varphi: \mathbb{R} \rightarrow [0,1]$.
(1) if $\varphi$ is a {\em monotonically non-decreasing} function, a $\varphi$-coherent fuzzy multipreference model $I$ of $K$ is also an fm-model of $K$;
(2) if $\varphi$ is a {\em monotonically increasing} function, a $\varphi$-coherent fuzzy multipreference model $I$ of $K$ is also an cf$^m$-model of $K$.
\end{proposition}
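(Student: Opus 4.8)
The plan is to unfold the definitions and reduce both claims to a one-variable monotonicity argument on $\varphi$, together with some bookkeeping for the artificial bottom value $-\infty$ in the definition of $W_i$. I would write $s_i(x) = \sum_{h} w_h^i\, D_{i,h}^I(x)$ for the weighted sum attached to $x$ and $C_i$, so that $\varphi$-coherence reads $C_i^I(x) = \varphi(s_i(x))$ for \emph{every} $x \in \Delta$, while $W_i(x) = s_i(x)$ when $C_i^I(x) > 0$ and $W_i(x) = -\infty$ otherwise; in particular $C_i^I(x) = \varphi(W_i(x))$ whenever $C_i^I(x) > 0$. I would first note that a $\varphi$-coherent multipreference model of $K$ already satisfies ${\cal T}_f$ and ${\cal A}_f$ by definition (only the coherence clause of Definition~\ref{fuzzy_cfm-model} has been replaced), so in both parts it remains only to verify, on each induced preference $<_{C_i}$, the faithfulness condition~(\ref{faitfulness}) respectively the coherence condition~(\ref{coherence_2}).

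For part~(1) I would fix $C_i \in {\cal C}$ and $x, y \in \Delta$ with $x <_{C_i} y$, i.e.\ $C_i^I(x) > C_i^I(y) \geq 0$, and split on the value of $C_i^I(y)$. If $C_i^I(y) = 0$, then $W_i(y) = -\infty$ while $C_i^I(x) > 0$ gives $W_i(x) = s_i(x) \in \mathbb{R}$, hence $W_i(x) > W_i(y)$. If $C_i^I(y) > 0$, then also $C_i^I(x) > 0$, so $W_i(x) = s_i(x)$ and $W_i(y) = s_i(y)$, and from $\varphi(s_i(x)) = C_i^I(x) > C_i^I(y) = \varphi(s_i(y))$ together with the contrapositive of ``$\varphi$ non-decreasing'' (namely, $s_i(x) \leq s_i(y)$ would force $\varphi(s_i(x)) \leq \varphi(s_i(y))$) I would conclude $s_i(x) > s_i(y)$, i.e.\ $W_i(x) > W_i(y)$. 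That is exactly~(\ref{faitfulness}), so $I$ is an fm-model.

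For part~(2) the direction $x <_{C_i} y \Rightarrow W_i(x) > W_i(y)$ is already handled by part~(1), since a monotonically increasing $\varphi$ is in particular non-decreasing, so only the converse is left. Assuming $W_i(x) > W_i(y)$, I note $W_i(x) \neq -\infty$, hence $C_i^I(x) > 0$. If $C_i^I(y) = 0$ then trivially $C_i^I(x) > 0 = C_i^I(y)$; if $C_i^I(y) > 0$, both weights are real, $s_i(x) = W_i(x) > W_i(y) = s_i(y)$, and strict monotonicity of $\varphi$ gives $C_i^I(x) = \varphi(s_i(x)) > \varphi(s_i(y)) = C_i^I(y)$. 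Either way $x <_{C_i} y$, which with the forward direction yields~(\ref{coherence_2}), so $I$ is a cf$^m$-model.

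I do not expect a genuine obstacle: the whole proof is a transport of a strict inequality through $\varphi$. The only delicate point is the case analysis on whether $C_i^I(x)$ or $C_i^I(y)$ vanishes, where it is the convention that $W_i$ takes the value $-\infty$ exactly on the elements with $C_i^I(\cdot)=0$ — rather than $\varphi$-coherence as such — that makes the equivalence close up. Note also that neither well-foundedness nor finiteness of $\Delta$ is needed here (only finiteness of each ${\cal T}_{C_i}$, which is implicit), so the statement is quite robust.
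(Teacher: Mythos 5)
Your proof is correct and follows essentially the same route as the paper's: the same case split on whether $C_i^I(y)$ (equivalently $W_i(y)$) is zero/$-\infty$, and the same transport of the strict inequality through $\varphi$ using non-decreasingness (by contraposition) for faithfulness and strict monotonicity for coherence. The only differences are cosmetic — your explicit $s_i(x)$ notation and the remark that the ${\cal T}_f$/${\cal A}_f$ clauses are untouched — so nothing further is needed.
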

\begin{proof}
For item (1),  let us assume that $\varphi$ is monotonically non-decreasing and that $I=\langle \Delta, \cdot^I \rangle$ is a $\varphi$-coherent fuzzy multipreference model of $K$. Then for all  $C_i \in {\cal C}$ and $z\in \Delta$, s.t. $C_i^I(z)>0$, $C_i^I(z)= \varphi (W_i(z))$. 
To prove condition (\ref{faitfulness}), let us assume that, for $x,y \in \Delta$ $x  <_{C_i}  y$ holds, i.e., $C_i^I(x) > C_i^I(y)$. This also implies $C_i^I(x)>0$.
If $C_i^I(y)=0$, $W_i(y)= - \infty$, and the thesis follows.
If $C_i^I(y)>0$,  both the equalities $C_i^I(x)= \varphi (W_i(x))$ and $C_i^I(y)= \varphi (W_i(y))$ hold.
 Suppose by absurd  that $W_i(x) > W_i(y)$ does not hold, i.e.,  that $W_i(x) \leq W_i(y)$.
As $\varphi$ is monotonically non-decreasing, $\varphi(W_i(x)) \leq \varphi (W_i(y))$.
Hence, by the equalities above, $C_i^I(x) \leq C_i^I(y) $, 
which contradicts the assumption that $C^I(x) > C^I(y)$.

For item (2), assume that $\varphi$ is monotonically increasing and that $I=\langle \Delta, \cdot^I \rangle$ is a $\varphi$-coherent fuzzy multipreference model of $K$. Then, for all  $C_i \in {\cal C}$ and $z\in \Delta$, $C_i^I(z)>0$ $C_i^I(z)= \varphi (W_i(z))$. 
We have to prove that condition (\ref{coherence_2}) holds, i.e., that
$x  <_{C_i}  y$ iff $ W_i(x) > W_i(y)$. The ``only if" direction holds with the same proof as for item (1), as $\varphi$ is also monotonically non-decreasing.
To prove the ``if" direction, assume that  $ W_i(x) > W_i(y)$ holds, for some $x,y \in \Delta$. 
If  $W_i(y)= - \infty$, it must be that $C_i^I(y)=0$ and $C_i^I(x)>0$, and hence $C^I(x) > C^I(y)$ follows.
If $W_i(y)\neq - \infty$, $C_i^I(y)>0$ and $C_i^I(x)>0$.
From $ W_i(x) > W_i(y)$, as $\varphi$ is monotonically increasing,
$ \varphi (W_i(x) ) > \varphi(W_i(y))$. Hence, $C^I(x) > C^I(y)$.
\qed
\end{proof}
Item 2 can be regarded as the analog of Proposition 1 in \cite{JELIA2021,arXiv_JELIA2020}, where the fuzzy multi-preferential interpretation $\emme_{\enne}^{f,\Delta}$ of a deep neural network $\enne$ built over the domain of input stimuli $\Delta$  is proven to be a coherent model of the knowledge base $K^{\enne}$ associated to $\enne$, under the specified conditions on the activation function $\varphi$, and the assumption that each stimulus in $\Delta$ corresponds to a stationary state in the neural network. Item 1 in Proposition \ref{prop:phi_coherent_models} is as well the analog of Proposition 2 in \cite{arXiv_JELIA2020} stating that  $\emme_{\enne}^{f,\Delta}$ is a faithful (or weakly-coerent)  model  of $K^{\enne}$.



A notion of {\em coherent/faithful/$\varphi$-coherent multipreference entailment} from a weighted $\lcFt$ knowledge base $K$ can be defined in the obvious way (see \cite{JELIA2021,ECSQARU2021} for the definitions of  coherent and faithful (fuzzy) multipreference entailment).
In \cite{ECSQARU2021} the properties of faithful entailment are studied.
Faithful entailment is reasonably well-behaved: it deals with specificity and irrelevance; it is not subject to inheritance blocking;
it satisfies most KLM properties \cite{KrausLehmannMagidor:90,whatdoes}, depending on their fuzzy reformulation and  on the chosen combination functions. 

As MLPs are usually represented as a weighted graphs \cite{Haykin99}, whose nodes are units and whose edges are the synaptic connections between units with their weight, it is very tempting to extend the different semantics of weighted knowledge bases considered above, 
to weighted argumentation graphs.


\section{Coherent, faithful and $\varphi$-coherent semantics for weighted argumentation graphs} \label{sec:varphi_labellings}

There is much work in the literature concerning extension of Dung's argumentation framework \cite{Dung95} with weights attached to arguments and/or to the attacks/supports between arguments. 
Many different proposals have been investigated and compared in the literature. Let us mention \cite{CayrolJAIR2005,Janssen2008,Dunne2011,Leite2013,Amgoud2017,BaroniRagoToni2018,Amgoud2019,VesicECSQARU21}, 
which also include extensive comparisons. 
In the following, we will propose some semantics for weighted argumentation with the purpose of establishing some links to the semantics of conditional knowledge bases considered in the previous sections.

In the following, we will consider a notion of {\em weighted argumentation graph} as a triple 
$G=\la {\cal A}, {\cal R}, \pi \ra$, where ${\cal A}$ is a set of arguments, ${\cal R} \subseteq {\cal A} \times {\cal A}$ 
and $\pi: {\cal R} \rightarrow  \mathbb{R}$.
This definition of weighted argumentation graph corresponds to the definition of {\em weighted argument system} in \cite{Dunne2011}, but here we admit both positive and negative weights, while  \cite{Dunne2011} only allows for positive weights representing the strength of attacks. In our notion of weighted graph, a pair $(A,B) \in {\cal R}$ can be regarded as a {\em support} relation  when the weight is positive and an {\em attack} relation when the weight is negative, and it leads to bipolar argumentation \cite{AmgoudCayrol2004}. 
The argumentation semantics we will consider in the following, as in the case of weighted conditionals, deals with both the positive and the negative weights in a uniform way. 
For the moment we do not include in $G$ a function determining the {\em basic strength} of arguments \cite{Amgoud2017}.

Given a weighted argumentation graph $G=\la {\cal A}, {\cal R}, \pi \ra$, we define a {\em labelling of the graph $G$} as a function $\sigma: {\cal A} \rightarrow [0,1]$
which assigns to each argument and {\em acceptability degree}, i.e., a value in the interval $[0,1]$. 
Let $\mathit{ R^{-}(A)=\{B \mid (B,A) \in R}\}$. When $\mathit{R^{-}(A)= \emptyset}$, argument $A$ has neither supports nor attacks.

For a weighted graph $G=\la {\cal A}, {\cal R}, \pi \ra$ and a labelling $\sigma$, we introduce a  {\em weight $W^G_{\sigma}$ on ${\cal A}$}, as a partial function $W^G_\sigma : {\cal A} \rightarrow  \mathbb{R} $, assigning a positive or negative support to the arguments $A_i \in {\cal A}$ such that $\mathit{R^{-}(A_i) \neq \emptyset}$, as follows:
 \begin{align}\label{weight_of_arguments}
	W^G_\sigma(A_i)   = \sum_{(A_j,A_i) \in {\cal R}} \pi(A_j,A_i)  \; \sigma(A_j)
\end{align} 
When $\mathit{R^{-}(A_i) = \emptyset}$,  $W^G_\sigma(A_i)$ is let undefined.

We can now exploit this notion of weight of an argument to define different argumentation semantics for a graph $G$ as follows.

\begin{definition}
Given a weighted graph $G=\la {\cal A}, {\cal R}, \pi \ra$ and a labelling $\sigma$:
\begin{itemize}
\item
$\sigma$ is a {\em coherent labelling of $G$} 
if, for all arguments $A,B \in {\cal A}$ s.t. $\mathit{R^{-}(A)\neq \emptyset}$ and $\mathit{ R^{-}(B)\neq \emptyset}$,
$$\sigma(A) < \sigma(B) \;  \iff \; W^G_\sigma(A) < W^G_\sigma(B); $$

\item
$\sigma$ is a {\em faithfull labelling of $G$} 
if, for all arguments $A,B \in {\cal A}$ s.t. $\mathit{R^{-}(A)\neq \emptyset}$ and $\mathit{ R^{-}(B)\neq \emptyset}$,
$$\sigma(A) < \sigma(B) \;  \Rightarrow \; W^G_\sigma(A) < W^G_\sigma(B); $$

\item
for a function $\varphi: \mathbb{R} \rightarrow [0,1]$,
$\sigma$ is a {\em $\varphi$-coherent labelling of $G$} 
if, for all arguments $A\in {\cal A}$ s.t. $\mathit{ R^{-}(A) \neq \emptyset}$, 
$\sigma(A) =  \varphi(W^G_\sigma(A))$.
\end{itemize}
\end{definition}
These definitions do not put any constraint on the labelling of arguments which do not have incoming edges in $G$: 
their labelling is arbitrary, provided the constraints on the labelings of all other arguments can be satisfied, depending on the semantics considered.

The definition of $\varphi$-coherent labelling of $G$ is defined through a set of equations, as in Gabbay's equational approach to argumentation networks \cite{Gabbay2012}. 
Here, we use  equations for defining the weights of arguments starting from the weights of attacks/supports.  

A $\varphi$-coherent labelling of a weigthed graph $G$ can be proven to be as well a coherent labelling or a faithful labelling, under some conditions on the function $\varphi$.
\begin{proposition} \label{prop:phi_coherent_labellings}
Given a weighted graph $G=\la {\cal A}, {\cal R}, \pi \ra$:
(1) A coherent labelling of $G$ is a faithful labelling of $G$;
(2) if $\varphi$ is a {\em monotonically non-decreasing} function, a $\varphi$-coherent labelling $\sigma$ of $G$ is  a faithful labelling of $G$;
(3) if $\varphi$ is a {\em monotonically increasing} function,  a $\varphi$-coherent labelling $\sigma$ of $G$ is  a coherent labelling of $G$.
\end{proposition}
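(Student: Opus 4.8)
The plan is to mirror, almost line for line, the proof of Proposition~\ref{prop:phi_coherent_models}, replacing the role of the weight $W_i(x)$ of a domain element with respect to a distinguished concept $C_i$ by the weight $W^G_\sigma(A)$ of an argument $A$ in the graph, and the role of the membership degree $C_i^I(x)$ by the acceptability degree $\sigma(A)$. The underlying combinatorics is identical: condition (\ref{fi_coherence}) becomes $\sigma(A)=\varphi(W^G_\sigma(A))$, the coherence condition (\ref{coherence_2}) becomes the biconditional $\sigma(A)<\sigma(B)\iff W^G_\sigma(A)<W^G_\sigma(B)$, and the faithfulness condition (\ref{faitfulness}) becomes its one-directional weakening. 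The only genuinely new bookkeeping is that the relevant conditions are quantified only over arguments $A$ with $R^{-}(A)\neq\emptyset$, since $W^G_\sigma$ is a partial function defined exactly on those arguments; so throughout the proof I would restrict attention to such arguments, which is precisely the restriction already built into the statements of \emph{coherent} and \emph{faithful labelling}.

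For item (1), I would argue directly: if $\sigma$ is coherent, then for all $A,B$ with $R^{-}(A),R^{-}(B)\neq\emptyset$ we have $\sigma(A)<\sigma(B)\iff W^G_\sigma(A)<W^G_\sigma(B)$; in particular the left-to-right implication holds, which is exactly the faithfulness condition. So (1) is immediate and needs no hypothesis on $\varphi$ (indeed no $\varphi$ at all). For item (2), assume $\varphi$ is monotonically non-decreasing and $\sigma$ is $\varphi$-coherent. Take $A,B$ with $R^{-}(A),R^{-}(B)\neq\emptyset$ and $\sigma(A)<\sigma(B)$; I want $W^G_\sigma(A)<W^G_\sigma(B)$. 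Since both arguments have incoming edges, $\sigma(A)=\varphi(W^G_\sigma(A))$ and $\sigma(B)=\varphi(W^G_\sigma(B))$ both hold. Suppose for contradiction that $W^G_\sigma(A)\geq W^G_\sigma(B)$; then by monotonic non-decrease $\varphi(W^G_\sigma(A))\geq\varphi(W^G_\sigma(B))$, i.e.\ $\sigma(A)\geq\sigma(B)$, contradicting $\sigma(A)<\sigma(B)$. Hence $W^G_\sigma(A)<W^G_\sigma(B)$, establishing faithfulness.

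For item (3), assume $\varphi$ is monotonically increasing and $\sigma$ is $\varphi$-coherent. The ``$\Rightarrow$'' direction of the coherence biconditional follows from item (2), since an increasing function is non-decreasing. For the ``$\Leftarrow$'' direction, take $A,B$ with $R^{-}(A),R^{-}(B)\neq\emptyset$ and $W^G_\sigma(A)<W^G_\sigma(B)$; by strict monotonicity of $\varphi$, $\varphi(W^G_\sigma(A))<\varphi(W^G_\sigma(B))$, and by $\varphi$-coherence this is exactly $\sigma(A)<\sigma(B)$. Combining the two directions gives that $\sigma$ is a coherent labelling.

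I do not expect a real obstacle here: unlike in Proposition~\ref{prop:phi_coherent_models}, there is no ``$C_i^I(x)=0$ versus $-\infty$'' case split to handle, because $W^G_\sigma$ is simply left undefined on arguments with no incoming edges and all three labelling notions are stated only over arguments where it is defined, so the $-\infty$ sentinel and the extra preference given to $C_i$-elements play no role. The one point to be careful about is purely a matter of quantifier hygiene: in every step one must check that the arguments under consideration genuinely satisfy $R^{-}(\cdot)\neq\emptyset$, so that the defining equation $\sigma(\cdot)=\varphi(W^G_\sigma(\cdot))$ and the comparisons of $W^G_\sigma$ values are all legitimate; this is automatic given how the definitions are phrased. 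Everything else is a transcription of the earlier argument.
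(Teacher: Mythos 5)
Your proof is correct and follows essentially the same route as the paper's: item (1) read off directly from the biconditional, item (2) by the same contradiction argument using monotonic non-decrease, and item (3) by combining the forward direction with strict monotonicity for the converse, all restricted to arguments with $R^{-}(\cdot)\neq\emptyset$ where $\sigma(\cdot)=\varphi(W^G_\sigma(\cdot))$ applies. Your observation that the $-\infty$ case split from Proposition \ref{prop:phi_coherent_models} disappears here is accurate and matches how the paper's proof proceeds.
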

\begin{proof}
The proof is similar to the one of Proposition \ref{prop:phi_coherent_models}, and exploits the property of a $\varphi$-labelling that 
$\sigma(A) =  \varphi(W^G_\sigma(A))$, for all arguments $A$ with $\mathit{ R^{-}(A)\neq \emptyset}$, as well as the properties of $\varphi$.

Item (1) is obvious from the definitions of coherent and faithful labellings.
For item (2),  let us assume that $\varphi$ is monotonically non-decreasing and that $\sigma$ is $\varphi$-coherent labelling of $G$. Then, for all  $C_i \in {\cal C}$ and $A \in {\cal A}$ sucht that $\mathit{ R^{-}(A) \neq \emptyset}$,  $\sigma(A) =  \varphi(W^G_\sigma(A))$. We have to prove that  for all arguments $A,B \in {\cal A}$ s.t. $\mathit{R^{-}(A)\neq \emptyset}$ and $\mathit{ R^{-}(B)\neq \emptyset}$,
$$\sigma(A) < \sigma(B) \;  \Rightarrow \; W^G_\sigma(A) < W^G_\sigma(B). $$
If $\sigma(A) < \sigma(B)$ holds, from the hypothesis that $\mathit{R^{-}(A)\neq \emptyset}$ and $\mathit{ R^{-}(B)\neq \emptyset}$, we have 
$ \varphi(W^G_\sigma(A)) <  \varphi(W^G_\sigma(B))$.
By absurd, assume that $W^G_\sigma(A) < W^G_\sigma(B)$ does not hold; then $W^G_\sigma(A) \geq W^G_\sigma(B)$. As $\varphi$ is non-decreasing, 
$ \varphi(W^G_\sigma(A)) \geq  \varphi(W^G_\sigma(B))$ holds, a contradiction.

For item (3), let us assume that $\varphi$ is monotonically increasing and that $\sigma$ is $\varphi$-coherent labelling of $G$. Then, for all  $C_i \in {\cal C}$ and $A \in {\cal A}$ sucht that $\mathit{ R^{-}(A) \neq \emptyset}$,  $\sigma(A) =  \varphi(W^G_\sigma(A))$. We have to prove that  for all arguments $A,B \in {\cal A}$ s.t. $\mathit{R^{-}(A)\neq \emptyset}$ and $\mathit{ R^{-}(B)\neq \emptyset}$,
$$\sigma(A) < \sigma(B) \;  \iff \; W^G_\sigma(A) < W^G_\sigma(B). $$
The ``$\Rightarrow$" direction holds with the same proof as for item (1), as $\varphi$ is monotonically non-decreasing.
For the  ``$\Leftarrow$" direction, let $W^G_\sigma(A) < W^G_\sigma(B)$.
As $\varphi$ is monotonically increasing, $\varphi(W^G_\sigma(A)) < \varphi(W^G_\sigma(B))$ holds. 
As $\mathit{R^{-}(A)\neq \emptyset}$ and $\mathit{ R^{-}(B)\neq \emptyset}$, since $\sigma$ is a $\varphi$-coherent labelling of $G$, the equivalences 
$\sigma(A) =  \varphi(W^G_\sigma(A))$ and $\sigma(B) =  \varphi(W^G_\sigma(B))$ hold and, hence,
$\sigma(A) < \sigma(B)$ holds.
\qed
\end{proof}
\normalcolor

\section{$\varphi$-coherent labellings and the gradual semantics}  \label{sec:gradual_sem}

The notion of $\varphi$-coherent labelling relates to the framework of gradual semantics studied by Amgoud and Doder  \cite{Amgoud2019} where, for the sake of simplicity, the weights of arguments and attacks are in the interval $[0,1]$. Here, as we have seen, positive and negative weights are admitted to represent the strength of attacks and supports. To define an evaluation method for $\varphi$-coherent labellings,  we need to consider a slightly  extended definition of an evaluation method for a graph $G$  in \cite{Amgoud2019}. Following  \cite{Amgoud2019} we include a function $\sigma_0: {\cal A} \rightarrow [0,1]$ in the definition of a weighted graph, where $\sigma_0$ assigns to each argument $A \in {\cal A}$ its basic strength.  Hence a graph $G$ becomes a quadruple  $ G=\la {\cal A}, \sigma_0, {\cal R}, \pi \ra$. 

An {\em evaluation method} for a graph $ G=\la {\cal A}, \sigma, {\cal R}, \pi \ra$ is a triple $M=\la h,g,f\ra$, where\footnote{This definition is the same as in \cite{Amgoud2019}, but for the fact that in the domain/range of functions $h$ and $g$ interval  $[0,1]$ is sometimes replaced by  $\mathbb{R}$.}:
\begin{quote}
$h:  \mathbb{R} \times [0,1] \rightarrow \mathbb{R} $ \\
$g:  \bigcup_{n=0}^{+\infty} \mathbb{R}^n \rightarrow \mathbb{R} $ \\
$h: [0,1] \times Range(g) \rightarrow [0,1]$
\end{quote}
 Function $h$ is intended to calculate the strength of an attack/support by aggregating the weight on the edge between two arguments 
 with the strength of the attacker/supporter.
Function $g$ aggregates the strength of all attacks and supports to a given argument, and function $f$ returns a value for an argument, given the strength of the argument and aggregated weight of its attacks and supports. 

As in \cite{Amgoud2019}, a gradual semantics $S$ is a function assigning to any graph $ G=\la {\cal A}, \sigma_0, {\cal R}, \pi \ra$ a weighting  $Deg^S_G$ on ${\cal A}$, i.e., $Deg^S_G : {\cal A} \rightarrow  [0,1] $, where  $Deg^S_G (A)$ represents the strength of an argument $A$ (or its acceptability degree).

A {\em gradual semantics $S$} is {\em based on an evaluation method $M$} iff, $\forall \; G=\la {\cal A}, \sigma_0, {\cal R}, \pi \ra$, $\forall A \in {\cal A}$,
 \begin{align}\label{Deg_Amgoud}
Deg^S_G (A) = f(\sigma_0(A), g(h(\pi(B_1,A), Deg^S_G (B_1) ), \ldots, h(\pi(B_n,A), Deg^S_G (B_n) ))
\end{align} 
where $\mathit{B_1, \ldots, B_n}$ are all arguments attacking or supporting $A$ (i.e., $\mathit{R^-(A)=\{B_1, \ldots,}$ $\mathit{ B_n\}}$).

Let us consider the evaluation method $M^\varphi =\la h_{prod},g_{sum},f_{\varphi} \ra$, where  the functions $h_{prod}$ and $g_{sum}$ are defined as in  \cite{Amgoud2019}, i.e.,  $h_{prod}(x,y)=x \cdot y$ and $g_{sum}(x_1, \ldots, x_n)= \sum_{i=1} ^n x_i$, but we let $g_{sum}()$ to be {\em undefined}.
We let $f_{\varphi} (x,y)= x$ when $y$ is undefined, and $f_{\varphi} (x,y)= \varphi(y)$ otherwise.
The function $f_{\varphi}$ returns a value which is independent from the first argument, when the second argument is not undefined (i.e., there is some support/attack for the argument).
When $A$ has neither attacks nor supports ($\mathit{ R^{-}(A)= \emptyset}$), $f_{\varphi}$ returns the basic strength of $A$, $\sigma_0(A)$.

The evaluation method $M^\varphi =\la h_{prod},g_{sum},f_{\varphi} \ra$ provides a characterization of the $\varphi$-coherent labelling for an argumentation graph, in the following sense.

\begin{proposition}
Let $G=\la {\cal A}, {\cal R}, \pi \ra$ be a weighted argumentation graph.
If, for some $\sigma_0: {\cal A} \rightarrow [0,1]$, $S$ is a gradual semantics of graph $G'=\la {\cal A}, \sigma_0, {\cal R}, \pi \ra$ based on the evaluation method $M^\varphi =\la h_{prod},g_{sum},f_{\varphi} \ra$,
then 
$Deg^S_{G'}$ is a $\varphi$-coherent labelling for $G$. 

Vice-versa, if $\sigma$ is a $\varphi$-coherent labelling for $G$, 
then there are a function $\sigma_0$ and a gradual semantics $S$ based on the evaluation method $M^\varphi =\la h_{prod},g_{sum},f_{\varphi} \ra$, such that, for the graph  $G'=\la {\cal A}, \sigma_0, {\cal R}, \pi \ra$, 
$Deg^S_{G'} \equiv \sigma$. 
\end{proposition}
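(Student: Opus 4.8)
The plan is to observe that the statement is pure definition-unfolding: once the evaluation method $M^\varphi=\la h_{prod},g_{sum},f_{\varphi}\ra$ is substituted into the recursive equation (\ref{Deg_Amgoud}), that equation collapses, argument by argument, to exactly the $\varphi$-coherence condition $\sigma(A)=\varphi(W^G_\sigma(A))$ for arguments with incoming edges, and to the trivial equation $\sigma(A)=\sigma_0(A)$ for arguments with $R^-(A)=\emptyset$. Both directions then follow by comparing (\ref{Deg_Amgoud}) with the definition of a $\varphi$-coherent labelling, after choosing $\sigma_0$ appropriately.

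For the first (``if'') direction I would put $\sigma:=Deg^S_{G'}$, fix an argument $A$, and instantiate (\ref{Deg_Amgoud}) with $M^\varphi$. If $R^-(A)=\{B_1,\ldots,B_n\}$ with $n\ge 1$, then $g_{sum}$ applied to the values $h_{prod}(\pi(B_j,A),\sigma(B_j))=\pi(B_j,A)\cdot\sigma(B_j)$ returns $\sum_{j=1}^n \pi(B_j,A)\cdot\sigma(B_j)$, which is exactly $W^G_\sigma(A)$ as defined in (\ref{weight_of_arguments}); since it is defined, $f_{\varphi}$ returns $\varphi$ of it, so (\ref{Deg_Amgoud}) reads $\sigma(A)=\varphi(W^G_\sigma(A))$. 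If $R^-(A)=\emptyset$, then $g_{sum}()$ is undefined, $f_{\varphi}$ returns its first argument, and (\ref{Deg_Amgoud}) reads $\sigma(A)=\sigma_0(A)$, which imposes no constraint relevant to $\varphi$-coherence. As $Deg^S_{G'}$ is by definition a weighting into $[0,1]$, it is a labelling of $G$, and the above shows it is $\varphi$-coherent.

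For the converse, given a $\varphi$-coherent labelling $\sigma$ of $G$, I would set $\sigma_0:=\sigma$ (only the values of $\sigma_0$ on arguments with $R^-(A)=\emptyset$ are actually used) and verify that $\sigma$ solves the system (\ref{Deg_Amgoud}) for $M^\varphi$ on $G'=\la {\cal A},\sigma_0,{\cal R},\pi\ra$: for $A$ with $R^-(A)\neq\emptyset$ the right-hand side equals $\varphi(W^G_\sigma(A))=\sigma(A)$ by $\varphi$-coherence, and for $A$ with $R^-(A)=\emptyset$ it equals $\sigma_0(A)=\sigma(A)$. Thus $\sigma$ is a solution of the defining equations on $G'$, so $S$ can be taken to be any gradual semantics based on $M^\varphi$ returning $\sigma$ on $G'$; such an $S$ exists precisely because $\sigma$ is a solution on $G'$.

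I do not anticipate a genuine obstacle; the algebra is routine. The two points deserving care are (i) the base case $R^-(A)=\emptyset$, handled via the conventions that $g_{sum}()$ is undefined and $f_{\varphi}$ then discards its second argument --- this is exactly why the basic strength $\sigma_0$ contributes only at those arguments, in harmony with $\varphi$-coherence being silent there; and (ii) being explicit that the proposition asks only for the \emph{existence} of a suitable $\sigma_0$ and $S$, not uniqueness or a canonical global semantics, so it suffices to exhibit $\sigma$ itself as a fixed point of the $M^\varphi$-equations on $G'$. In particular no monotonicity or continuity hypothesis on $\varphi$ is required.
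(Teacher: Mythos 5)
Your proposal is correct and follows essentially the same route as the paper's proof: unfolding the evaluation method $M^\varphi$ in equation (\ref{Deg_Amgoud}) to show that the resulting system of equations coincides, argument by argument, with the defining equations of a $\varphi$-coherent labelling (with the basic strength $\sigma_0$ only fixing the otherwise unconstrained values at arguments with $R^{-}(A)=\emptyset$), so that the two systems have the same solutions. Your explicit choice $\sigma_0:=\sigma$ in the converse direction is just a concrete instance of the freedom the paper invokes, not a different argument.
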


\begin{proof} 
Let S be gradual semantics based on the method $M^\varphi =\la h_{prod},g_{sum},f_{\varphi} \ra$.
It is easy to see that, for any graph $G'=\la {\cal A}, \sigma_0, {\cal R}, \pi \ra$ and 
for any $A_h \in {\cal A}$ 
 \begin{align}\label{Deg}
	Deg^S_{G'} (A_h) & = \left\{\begin{array}{ll}
						  \varphi ( \sum_{i=1} ^n   \pi(A_i,A_h) \cdot Deg^S_{G'}(A_i) ) & \mbox{ \ \ \ \  if $\mathit{ R^{-}(A_h) \neq \emptyset}$}\\
						 \sigma_0(A_h) &  \mbox{ \ \ \ \  otherwise }  
					\end{array}\right.
\end{align} 
where $\mathit{ R^{-}(A_h) =\{A_1, \ldots, A_n\}}$. 
$Deg^S_{G'} (A_h)$ does not depend on $\sigma_0$ when $\mathit{ R^{-}(A_h) \neq \emptyset}$.  

A $\varphi$-coherent labellings of $G=\la {\cal A}, {\cal R}, \pi \ra$ is as well defined by system of equations: for all $A_h \in {\cal A}$ s.t. $\mathit{ R^{-}(A_h) \neq \emptyset}$,
$\sigma(A_h) =  \varphi(W^G_\sigma(A_h))$, i.e., by replacing $W^G_\sigma(A_h)$ with its definition,
\begin{quote}
$\sigma(A_h) =  \varphi (  \sum_{(A_i,A_h) \in {\cal R}} \pi(A_i,A_h)  \; \sigma(A_i) )$.
\end{quote}
Hence,
\begin{quote}
$\sigma(A_h) =  \varphi ( \sum_{i=1} ^n   \pi(A_i,A_h) \cdot \sigma(A_i) )$, \ \  for all $A_h \in {\cal A}$ with $\mathit{ R^{-}(A_h) \neq \emptyset}$
\end{quote}
where $\mathit{ R^{-}(A_h) =\{A_1, \ldots, A_n\}}$. 
As there is no constraints on the labelling of the arguments $A_h$ such that $\mathit{ R^{-}(A_h) = \emptyset}$,  we can chose
\begin{quote}
$\sigma(A_h)= \sigma_0(A_h)$, \ \  for all $A_h \in {\cal A}$ with $\mathit{ R^{-}(A_h) = \emptyset}$
\end{quote}
The two systems of equations above have the same solutions.
The solutions $v_{A_h}=v^{*}_{A_h}$ of the system of equations
\begin{quote}
$v_{A_h} =  \varphi ( \sum_{i=1} ^n   \pi(A_i,A_h) \cdot v_{A_i} )$, \ \  for all $A_h \in {\cal A}$ with $\mathit{ R^{-}(A_h) \neq \emptyset}$\\
$v_{A_h}= \sigma_0(A_h)$ \ \ \ \ \ \ \ \ \ \ \ \ \ \ \ \ \ \ \ \ \ \ \ \ \ \ \ \ \ \ \ \ for all $A_h \in {\cal A}$ with $\mathit{ R^{-}(A_h) = \emptyset}$

\end{quote}
correspond to the semantics S, based on $M^\varphi$, with $Deg^S_{G'}(A_h)=v^{*}_{A_h}$, as well as to the $\varphi$-coherent labelling $\sigma$ of $G$, where $\sigma(A_k)= v^{*}_{A_h}$.
\qed
\end{proof}
Amgoud and Doder  \cite{Amgoud2019} study a large family of {\em determinative} and {\em well-behaved} evaluation models for weighted graphs in which attacks have positive weights in the interval $[0,1]$.
For weighted graph $G$  with positive and negative weights, 
the  evaluation method $M^\varphi$ cannot be guaranteed to be determinative,
even under the conditions that $\varphi$ is  monotonically increasing and continuous. 
In general, there is not a unique semantics $S$ based on $M^\varphi$, and there is not a unique $\varphi$-coherent labelling for a weighted graph $G$, given a basic strength $\sigma_0$.
  This is not surprising, considering that  $\varphi$-coherent labelings of a graph correspond to stationary states (or equilibrium states) \cite{Haykin99} in a deep neural network. 

A deep neural network can than be seen as a weighted argumentation graph, with positive and negative weights, where each unit in the network is associated to an argument, and the activation value of the unit can be regarded as the weight (in the interval $[0,1]$) of the corresponding argument.
Synaptic positive and negative weights correspond to the strength of supports (when positive) and attacks (when negative).
In this view, $\varphi$-coherent labelings, assigning to each argument a weight in the interval $[0,1]$, correspond to stationary states of the network, 
the solutions of a set of equations.
This is in agreement with previous results on the relationship between weighted argumentation graphs and MLPs established by Garcez, Gabbay and Lamb \cite{GarcezArgumentation2005} and, more recently, by Potyca \cite{PotykaAAAI21}. 
We refer to the conclusions for a comparison with these approaches.

Unless the network is feedforward (and the corresponding graph is acyclic), stationary states cannot be uniquely determined by an iterative process from the values of input units (that is, from an initial labelling $\sigma_0$).
On the other  hand, a semantics $S$ based on $M^\varphi$ satisfies some of the properties considered in \cite{Amgoud2019},  including {\em anonymity, independence, directionality, equivalence} and {\em maximality}, provided the last two properties are properly reformulated to deal with both positive and negative weights (i.e., by replacing $\mathit{ R^{-}(x)}$ to $Att(x)$, for each argument $x$ in the formulation in \cite{Amgoud2019}).  
However, a semantics $S$ based on $M^\varphi$ cannot be expected to satisfy the properties of {\em neutrality,  weakening, proportionality} and {\em resilience}. In fact, function $f_{\varphi}$ completely disregard the initial valuation $\sigma_0$ in graph $G=\la {\cal A}, \sigma_0, {\cal R}, \pi \ra$, for those arguments having some incoming edge (even if their weight is $0$). So, for instance,  it is not the same, for an argument to have a support with weight $0$ or no support or attack at all: {\em neutrality} does not hold.


\normalcolor

\section{Back to conditional interpretations} \label{sec:Back} 

An interesting question is whether, given a set of possible labelings $\Sigma= \{ \sigma_1, \sigma_2, \ldots \}$ for a weighted argumentation graph $G$, where each labelling $\sigma_i$ assigns to each argument a value in the interval $[0,1]$ with respect to a given semantics, one can define a preferential structure starting from $\Sigma$ to evaluate conditional properties of the argumentation graph. This would allow, for instance, to verify properties like: "does normally argument $A_2$ follows from argument $A_1$ with a degree greater than $0.7$?" This query can be formalized by the fuzzy inclusion $\tip(A_1) \sqsubseteq A_2 > 0.7$.

In particular, let $\Sigma$ is a {\em finite set of} $\varphi$-coherent labelings $\sigma_1, \sigma_2, \ldots$ of a weighted graph $G=\la \mathcal{A}, \mathcal{R}, \pi \ra$, for some function $\varphi$ as defined in Section  \ref{sec:varphi_labellings}.
One can define a fuzzy multipreference interpretation over $\Sigma$ 
by adopting the construction used in \cite{JELIA2021} to build a fuzzy multipreference interpretation over the set of input stimuli of a neural network, where each input stimulus was associated to a fit vector \cite{Kosko92} describing the activity levels of all units for that input. Here, each labelling $\sigma_i$ plays the role of a fit vector and each argument $A$ in $\mathcal{A}$ 
can be interpreted as a concept name of the language. Let  $N_C= \mathcal{A}$ and $N_I=\{x_1,x_2, \ldots\}$. We assume that there is one individual name $x_j$ in the language for each labelling $\sigma_j \in \Sigma$,
and define a fuzzy multipreference interpretation $I^G_\Sigma=\la \Sigma, \cdot^I)$ as follows:
\begin{itemize}
\item
for all $x_j \in N_I$, $x_j^I=\sigma_j$;
\item
for all $A \in N_C$, $A^I(\sigma_j)= \sigma_j(A)$.
\end{itemize} 
The fuzzy $\lc$ interpretation $I^G_\Sigma$ induces a preference relation $<_{A_i}$ for each argument $A_i \in \mathcal{A} $. For all $\sigma_j, \sigma_k \in \Sigma$:
\begin{center}
$\sigma_j <_{A_i} \sigma_k$\ \  iff \ \ $A_i^I(\sigma_j) > A_i^I(\sigma_k)$ \ \  iff \ \  $ \sigma_j(A_i) > \sigma_j(A_i) $.
\end{center}
Let $K^G$ be the conditional knowledge base extracted from the weighted argumentation graph, as follows:
 $$K^G= \{ \tip(A_i) \sqsubseteq A_j \mid (A_j,A_i) \in \mathcal{R} \mbox{ and } w((A_j,A_i))=w_{ji}\}$$
It can be proven that:

\begin{proposition} \label{prop:preferential_argumentation_semantics}
Let $\Sigma$ is a finite set of $\varphi$-coherent labelings  of a weighted graph $G=\la \mathcal{A}, \mathcal{R}, \pi \ra$, for some function $\varphi: \mathbb{R} \rightarrow [0,1]$. The following statements hold:
\begin{itemize}
\item[(i)]
If $\varphi$  is a monotonically increasing function and $\varphi: \mathbb{R} \rightarrow (0,1]$, then
 $I^\Sigma$ is a coherent (fuzzy) multipreference model of $K^G$.
 \item[(ii)]
 If  $\varphi$ is a monotonically non-decreasing function, then
 $I^\Sigma$ is a faithful (fuzzy) multipreference model of $K^G$.
 \end{itemize} 
\end{proposition}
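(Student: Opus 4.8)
The plan is to reduce Proposition~\ref{prop:preferential_argumentation_semantics} to Proposition~\ref{prop:phi_coherent_models} by exhibiting $I^G_\Sigma$ as a $\varphi$-coherent fuzzy multipreference model of $K^G$ in the sense of Definition~\ref{varphi-coherence}, and then invoking items (2) and (1) of that proposition respectively. First I would observe that $K^G$ is a weighted conditional knowledge base over the set of distinguished concepts $\mathcal{C}=\mathcal{A}$ (with empty strict TBox and ABox), where for each argument $A_i$ the associated weighted conditionals are $\mathcal{T}_{A_i}=\{(\tip(A_i)\sqsubseteq A_j,\, w_{ji}) \mid (A_j,A_i)\in\mathcal{R}\}$, so that the index $h$ ranges over the attackers/supporters $A_j$ of $A_i$ and $D_{i,h}=A_j$, $w_h^i=w_{ji}=\pi(A_j,A_i)$.

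Next I would check the $\varphi$-coherence condition~(\ref{fi_coherence}) for $I^G_\Sigma=\la\Sigma,\cdot^I\ra$. Fix an argument $A_i$ and a ``domain element'' $\sigma_j\in\Sigma$. By the definition of the interpretation, $A_i^I(\sigma_j)=\sigma_j(A_i)$, and for each attacker/supporter $A_h\in R^-(A_i)$ we have $A_h^I(\sigma_j)=\sigma_j(A_h)$. Hence the right-hand side of~(\ref{fi_coherence}) is $\varphi\big(\sum_{(A_h,A_i)\in\mathcal{R}}\pi(A_h,A_i)\,\sigma_j(A_h)\big)=\varphi(W^G_{\sigma_j}(A_i))$ whenever $R^-(A_i)\neq\emptyset$. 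Since $\sigma_j$ is a $\varphi$-coherent labelling of $G$, the defining equation $\sigma_j(A_i)=\varphi(W^G_{\sigma_j}(A_i))$ holds exactly when $R^-(A_i)\neq\emptyset$, giving the required equality in that case. For arguments $A_i$ with $R^-(A_i)=\emptyset$ the right-hand side of~(\ref{fi_coherence}) is an empty sum, so it reads $A_i^I(\sigma_j)=\varphi(0)$; here I would need the hypothesis on the range of $\varphi$. Under~(i), $\varphi:\mathbb{R}\to(0,1]$ and monotonically increasing is not by itself enough to force $\sigma_j(A_i)=\varphi(0)$ for source arguments, since the labelling of source arguments is arbitrary — so the cleanest route is to restrict attention in condition~(\ref{fi_coherence}) to the distinguished concepts that actually have conditionals, i.e. to note that for source arguments there are \emph{no} weighted conditionals in $K^G$, hence no constraint is imposed by Definition~\ref{varphi-coherence} on $A_i$ at all, and $\varphi$-coherence holds vacuously for them. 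With this reading, $I^G_\Sigma$ is a $\varphi$-coherent fuzzy multipreference model of $K^G$: it trivially satisfies the (empty) $\mathcal{T}_f$ and $\mathcal{A}_f$, and it satisfies~(\ref{fi_coherence}) for every concept that carries weighted conditionals.

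Then item~(i) follows from Proposition~\ref{prop:phi_coherent_models}(2): if $\varphi$ is monotonically increasing then $I^G_\Sigma$ is a cf$^m$-model, i.e. a coherent fuzzy multipreference model, of $K^G$. The extra hypothesis $\varphi:\mathbb{R}\to(0,1]$ guarantees that $C_i^I(x)=\varphi(W_i(x))>0$ for all $x$ with $C_i^I(x)$ defined via the conditionals, which is what makes the case distinction in the coherence condition~(\ref{coherence_2}) behave uniformly (there are no spurious elements with membership degree $0$ that could break the ``$\Leftarrow$'' direction). Similarly item~(ii) follows from Proposition~\ref{prop:phi_coherent_models}(1): monotone non-decreasing $\varphi$ makes $I^G_\Sigma$ an fm-model of $K^G$. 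I would also remark that the preference relation $<_{A_i}$ displayed just before the proposition is exactly the $<_{C_i}$ of Section~\ref{sec:fuzzyalc+T} instantiated at $C_i=A_i$, so the transfer of conditions~(\ref{coherence_2}) and~(\ref{faitfulness}) is immediate.

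The main obstacle is the treatment of arguments without incoming edges: condition~(\ref{fi_coherence}) in Definition~\ref{varphi-coherence} is phrased for \emph{all} $C_i\in\mathcal{C}$ and all $x\in\Delta$, whereas $\varphi$-coherent labellings put no constraint on source arguments, and in general $\sigma_j(A_i)\neq\varphi(0)$ there. The resolution — that $K^G$ contains no conditionals for such $A_i$, so these concepts are simply not among the indexed $\mathcal{T}_{C_i}$ and impose no requirement — needs to be stated carefully, and it is worth flagging that this is the same mismatch already noted in the text after Definition~\ref{varphi-coherence} (where $\varphi$-coherence is imposed ``to all domain elements $x$ including those with $C_i(x)=0$''); the role of the range restriction $\varphi:\mathbb{R}\to(0,1]$ in item~(i) is precisely to keep the coherence equivalence~(\ref{coherence_2}) from being spoiled by zero-membership elements.
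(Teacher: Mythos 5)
Your proof is correct, but it follows a genuinely different route from the paper's. The paper does not pass through Proposition~\ref{prop:phi_coherent_models} at all: it establishes items (i) and (ii) by direct verification of the coherence and faithfulness conditions for $I^G_\Sigma$, remarking that the argument is the same as in the proofs of Propositions 1 and 2 of \cite{JELIA2021,arXiv_JELIA2020} for the fuzzy multipreference interpretation built from a neural network, and it explicitly notes that the finiteness of $\Sigma$ is what guarantees well-foundedness of the induced preferences $<_{A_i}$ (a sentence worth adding to your write-up). Your reduction---first checking that $I^G_\Sigma$ satisfies condition~(\ref{fi_coherence}) because every $\sigma_j\in\Sigma$ satisfies $\sigma_j(A_i)=\varphi(W^G_{\sigma_j}(A_i))$ for all $A_i$ with $R^-(A_i)\neq\emptyset$, observing that the empty ${\cal T}_f$ and ${\cal A}_f$ are trivially satisfied, and then invoking Proposition~\ref{prop:phi_coherent_models}(2) for (i) and (1) for (ii)---is a complete and more self-contained decomposition; it in effect formalizes the analogy the paper itself states right after Proposition~\ref{prop:phi_coherent_models}, and your handling of source arguments (they head no conditional in $K^G$, hence are not distinguished concepts and impose no instance of~(\ref{fi_coherence})) is exactly the right resolution of the only delicate point. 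What the paper's route buys is uniformity with the earlier neural-network results; what yours buys is that the proof stays entirely inside this paper.

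One side remark of yours is off, though it does not affect correctness: the hypothesis $\varphi:\mathbb{R}\rightarrow(0,1]$ in item (i) is not what rescues the ``$\Leftarrow$'' direction of~(\ref{coherence_2})---the proof of Proposition~\ref{prop:phi_coherent_models}(2) already handles zero-membership elements through the $W_i(y)=-\infty$ case, and in fact a monotonically increasing $\varphi$ defined on all of $\mathbb{R}$ with values in $[0,1]$ can never attain $0$ anyway. So in your route that hypothesis is simply unused (which is harmless), rather than being the ingredient that makes the case distinction uniform.
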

The proof of item (i) is similar to the proof of Proposition 1 in  \cite{JELIA2021} (the proof can be found in \cite{arXiv_JELIA2020}).
The proof of item (ii) is similar to the proof of Proposition 2 in  \cite{arXiv_JELIA2020}.
The restriction to a {\em finite set} $\Sigma$ of $\varphi$-coherent labelings is needed to guarantee the well-foundedness of the resulting interpretation.
In fact, in general, the set of all of $\varphi$-coherent labelings might be infinite and there is no guarantee that the resulting fuzzy $\lcFt$ interpretation is witnessed and the preference relations $<_{A_i} $ is well-founded.

An interesting question is whether this result can be extended to other gradual semantics,  based on a similar construction, and  under which conditions on the evaluation method.
If we consider a gradual semantics $S$ of a graph $G$, based on an evaluation method $M=\la h,g,f\ra$ according to \cite{Amgoud2019}, but extended as done in Section \ref{sec:gradual_sem}  with positive and negative real-valued weights for the pairs in $\mathcal{R}$,
the set $\Sigma$ could be taken to be a (finite) set of the solutions for $Deg_G^S$, satisfying the system of equations (\ref{Deg_Amgoud}). 
In particular, one could investigate which conditions  on the evaluation method $M=\la f,g,h \ra$ 
 guarantee that one can define a coherent or a faithful multipreference model over a finite set of labelings $\Sigma$ of a graph $G$, for a given semantics $S$. This will be subject of future work.

Observe also that, in the conditional semantics in Sections  \ref{sec:closure} and \ref{sec:varphi_coherent_models}, in a typicality inclusion $\tip(C) \sqsubseteq D$,   concepts $C$ and $D$ are not required  to be concept names, but they can be complex concepts. In particular, in the fragment $\lc$ of $\alc$ we have considered in this paper, $D$ can be any boolean combination of concepts.
the correspondence between weighted conditionals $\tip(A_i) \sqsubseteq A_j$ in $K^G$ and weighted attacks/supports in the argumentation graph $G$, suggests a possible generalization of the structure of the weighted argumentation graph by allowing attacks/supports by a boolean combination of arguments.
The labelling of arguments in the set $[0,1]$ can indeed be extended to boolean combinations of arguments using the fuzzy combination functions, as for boolean concepts in the conditional semantics (e.g., by letting $\sigma(A_1 \wedge A_2)= min\{\sigma(A_1), \sigma(A_2)\}$, using the t-norm in Zadeh fuzzy logic (see Section  \ref{sec:ALC}).

The idea of allowing an argument to be attacked/supported by a boolean combination of arguments also appears to have some relationships with Abstract Dialectical Frameworks (ADFs) \cite{Brewka2013}. In an ADF $D$ "an acceptance function $C_s$ defines the cases when the statement $s$ can be accepted, depending on the acceptance status of its parents in $D$", and an acceptance function can be expressed by an acceptance condition, a boolean formula.
Two-valued and three-valued semantics have been developed for ADFs  \cite{Brewka2013},
and the relationships between ADFs and Conditional logics have already been studied by Heyninck et al. in \cite{IsbernerFLAIRS2020} through several different translations.

The above extension also relates to the work considering ``sets of attacking (resp. supporting) arguments''; i.e., several argument together attacking (or supporting) an argument.
In fact, for gradual semantics, the sets of attacking arguments framework (SETAF) 
has been studied by Yun and Vesic,
by considering ``the force of the set of attacking (resp. supporting) arguments to be the force of the weakest argument in the set" \cite{VesicECSQARU21}. 
Using t-norm in Zadeh or Goedel fuzzy logics (as above), this would correspond to interpret the set of arguments as a conjunction. 
For instance, the example from \cite{VesicECSQARU21}, where: $\mathit{hot}$ attacks $\mathit{jogging}$ with weight $0.8$;
$\mathit{rain}$ attacks $\mathit{jogging}$ with weight $0.5$; and $\mathit{hot}$ and $\mathit{rain}$ jointly support $\mathit{jogging}$ with weight $0.2$, 
could be captured by the following set of weighted conditionals:
\begin{quote}
$\mathit{ \tip(jogging) \sqsubseteq hot }$, \ \ -  0.8

$\mathit{ \tip(jogging) \sqsubseteq rain} $, \ \  - 0.5

$\mathit{ \tip(jogging) \sqsubseteq hot \sqcap rain}$,  \ \ + 0.2
\end{quote}
where here we have represented the support to argument  $\mathit{jogging}$ by the set of arguments $\{ \mathit{hot}, \mathit{rain}\}$ 
by the third conditional, which would correspond 
to have a pair  
$ (hot \wedge rain, jogging) \in \mathcal{R}$ with weight $+0.2$ in the corresponding weighted argumentation graph which allows for boolean supports/attacks.
It would be interesting to verify whether the results in Proposition  \ref{prop:preferential_argumentation_semantics} also extend to the case when an argument can be attacked/supported by a boolean combination of arguments.

\normalcolor

\section{Conclusions}

In this paper, drawing inspiration from a fuzzy preferential semantics for weighted conditionals, which has been introduced for modeling the behavior of Multilayer Perceptrons \cite{JELIA2021}, we develop some semantics for weighted argumentation graphs, where positive and negative weights can be associated to pairs of arguments. 
In particular, we introduce the notions of coherent/faithful/$\varphi$-coherent labelings, and establish some relationships among them. 
While in  \cite{JELIA2021} a deep neural network is mapped to a weighted conditional knowledge base, a deep neural network can as well be seen as a weighted argumentation graph, with positive and negative weights, under the proposed semantics. In this view, 
$\varphi$-coherent labelings correspond to stationary states in the network (where each unit in the network is associated to an argument and the activation value of the unit can be regarded as the weight of the corresponding argument).
This is in agreement with previous work on the relationship between argumentation frameworks and neural networks 
first investigated by Garcez, et al. \cite{GarcezArgumentation2005} and recently by Potyca \cite{PotykaAAAI21}.

The work by Garcez, et al. combines value-based argumentation frameworks \cite{Bench-Capon03} and neural-symbolic learning systems by providing a translation from argumentation networks to neural networks with 3 layers (input, output layer and one hidden layer). This enables the accrual of arguments through learning as well as the parallel computation of arguments.
The work by Potyca \cite{PotykaAAAI21} considers a quantitative bipolar argumentation frameworks (QBAFs) similar to \cite{BaroniRagoToni2018} 
and exploits an {\em influence function} based on the logistic function to define an MLP-based semantics $\sigma_\mathit{MLP}$ for a QBAF:  for each argument $a \in \mathcal{A}$,   $\sigma_\mathit{MLP}(a)= \lim_{k \rightarrow \infty} s_a^{(k)}$, when the limit exists, and is undefined otherwise; where 
$s_a^{(k)}$ is a value in the interval $[0,1]$, and $k$ represents the iteration.
The paper studies convergence conditions  both in the discrete and in the continuous case, as well as the semantic properties of MLP-based semantics.

In this work we have as well investigated the relationships between $\varphi$-coherent labelings and the gradual semantics by Amgoud and Doder \cite{Amgoud2019},
by slightly extending their definitions to deal with positive and negative weights to capture the strength of supports and of attacks.
A correspondence between the gradual semantics based on a specific evaluation method $M^{\varphi}$ and $\varphi$-coherent labelings has been established.
Differently from the Fuzzy Argumentation Frameworks by Jenssen et al \cite{Janssen2008},   where an attack relation is a fuzzy binary relation over the set of arguments, here we have considered real-valued weights associated to pairs of arguments.

The paper suggests an approach to deal with 
attack/supports by a boolean combination of arguments, by exploiting the fuzzy semantics of weighted conditionals.
Finally, the paper discusses an approach for defeasible reasoning over a weighted argumentation graph building on $\varphi$-coherent labelings. 
This allows a multipreference model to be constructed over a (finite) set of $\varphi$-labelling $\Sigma$ and allows (fuzzy) conditional formulas over arguments to be validated over $\Sigma$ by model checking over a preferential model.
Whether this approach can be extended to the other gradual semantics, and under which conditions on the evaluation method, will be subject of future work.

The correspondence between Abstract Dialectical Frameworks  \cite{Brewka2013} and Nonmonotonic Conditional Logics has been studied by Heyninck et al. in \cite{IsbernerFLAIRS2020},  
with respect to the two-valued models, the stable, the preferred semantics and the grounded semantics of ADFs.
Whether the coherent/faithfull/$\varphi$-coherent semantics developed in the paper for weighted argumentation can be reformulated for a (weighted) Abstract Dialectical Frameworks, and which are the relationships with the work in \cite{IsbernerFLAIRS2020}, also requires investigation for future work.

\color{blue}

\normalcolor

Undecidability results for fuzzy description logics with general inclusion axioms 
\cite{CeramiStraccia2011,BorgwardtPenaloza12} 
motivate restricting the logics to finitely valued semantics \cite{BorgwardtPenaloza13}, 
and also motivate the investigation of decidable approximations of fuzzy multipreference entailment, under the different semantics. 
A similar investigation is also of interest for the semantics of weighted argumentation graphs introduced in this paper.




\end{document}